\newcommand{\cC}{\mathcal C}
\newcommand{\cD}{\mathcal D}
\newcommand{\cF}{\mathcal F}
\newcommand{\cW}{\mathcal W}
\newcommand{\X}{\mathcal{X}}
\newcommand{\Y}{\mathcal{Y}}
\newcommand{\hgamma}{\hat\gamma}
\DeclareMathOperator*{\argmax}{arg\,max}
\newcommand{\E}[1]{\mathbb E\left[#1\right]}
\newcommand{\Prob}[1]{\mathbb{P} \left(#1\right)}
\newcommand{\ind}[1]{\mathds{1}_{\left\{#1\right\}}}
\newcommand{\eps}{\varepsilon}
\newcommand{\expectation}[2]{\mathbb{E}_{#1}\left[\,#2\,\right]} 
\newsavebox\MBox
\definecolor{color}{rgb}{0.122, 0.435, 0.698}
\newmdenv[innerlinewidth=0.5pt, roundcorner=4pt,linecolor=color,innerleftmargin=6pt,
innerrightmargin=6pt,innertopmargin=6pt,innerbottommargin=6pt]{color_box}
\theoremstyle{plain}
\newtheorem{theorem}{Theorem}[section]
\theoremstyle{definition}
\newtheorem{definition}[theorem]{Definition}
\newtheorem{requirements}[theorem]{Requirements}
\newtheorem{example*}{Example}
\icmltitlerunning{Multicalibration Yields Better Matchings}
\begin{document}

\twocolumn[
  \icmltitle{Multicalibration Yields Better Matchings}



  \icmlsetsymbol{equal}{*}

  \begin{icmlauthorlist}
    \icmlauthor{Riccardo Colini Baldeschi}{meta}
    \icmlauthor{Simone Di Gregorio}{sapienza}
    \icmlauthor{Simone Fioravanti}{sapienza}
    \icmlauthor{Federico Fusco}{sapienza}
    \icmlauthor{Ido Guy}{meta}
    \icmlauthor{Daniel Haimovich}{meta}
    
    \icmlauthor{Stefano Leonardi}{sapienza}
    \icmlauthor{Fridolin Linder}{meta}
    \icmlauthor{Lorenzo Perini}{meta}
    \icmlauthor{Matteo Russo}{epfl}
    \icmlauthor{Cem Sirin}{sapienza}
    \icmlauthor{Niek Tax}{meta}
  \end{icmlauthorlist}

  \icmlaffiliation{meta}{Meta Central Applied Science, UK}
  \icmlaffiliation{sapienza}{Dept. of Computer, Control and Management Engineering, Sapienza University of Rome, Rome, Italy}
  \icmlaffiliation{epfl}{EPFL, Switzerland}

  \icmlcorrespondingauthor{Simone Di Gregorio}{simone.digregorio@uniroma1.it}

  \icmlkeywords{Multicalibration, Matching, Optimization}

  \vskip 0.3in
]



\printAffiliationsAndNotice{}  

\begin{abstract}
Consider the problem of finding the best matching in a weighted graph where we only have access to predictions of the actual stochastic weights, based on an underlying context.
If the predictor is the Bayes optimal one, then computing the best matching based on the predicted weights is optimal. However, in practice, this perfect information scenario is not realistic. Given an imperfect predictor, a suboptimal decision rule may compensate for the induced error and thus outperform the standard optimal rule. 
In this paper, we propose \emph{multicalibration} as a way to address this problem. This fairness notion
requires a predictor to be unbiased on each element of a family of protected sets of contexts. Given a class of matching algorithms $\cC$ and any predictor $\gamma$ of the edge-weights, we show how to construct a specific multicalibrated predictor $\hgamma$, with the following property.  Picking the best matching based on the output of $\hgamma$ is competitive with the best decision rule in $\cC$ applied onto the original predictor $\gamma$. We complement this result by providing sample complexity bounds, and by performing numerical experiments.

\end{abstract}
\section{Introduction}

    The interplay of classical algorithms and machine learning routines in industry pipelines is a well-established phenomenon: optimization algorithms are used to \emph{fit} learning models, which, in turn, are used to \emph{generate} inputs for algorithmic tasks, or to \emph{guide} them. While one side of this synergy is well understood, only recently has the theory community started investigating how machine learning may actually help to answer classical algorithmic questions. 
    
    In the \emph{algorithms with predictions} framework \citep{MitzenmacherV20,BalkanskiSurvey}, the goal is to investigate the extent to which additional information provided by some machine learning prediction can improve the worst-case theoretical guarantees. If such additional information is correct, then the performance should improve, while the whole pipeline should be robust with respect to bad quality predictions. Similarly, in \emph{Data-Augmented Algorithm Design} \citep{Balcan20}, the focus is on learning from data the best algorithm on a specific input distribution. In this paper, we investigate a problem that is similar in spirit to these lines of work and is practically motivated. Imagine running an optimization task on an input that is not {fully} known, but whose relevant features are only predicted by some machine learning black box. Our goal is to understand how such a prediction can be modified \emph{ex-post}, to improve the overall quality of the algorithmic solution.

    {As a first example, } consider the problem of choosing the best out of $d$ actions.
    {The environment is modeled by a pair of random vectors $(V, X) \sim \cD$, where $X \in \mathbb{R}^n$ represents context/features that can be used to guide the decision, and $V \in [0,1]^d$ contains the actions' rewards.}
    If we know $\cD$, then the value-maximizing strategy entails choosing $\argmax \E{V_i|X}$; however, in applications, we can only base our decision on a predictor $\gamma: \mathbb{R}^n \rightarrow [0, 1]^d$ of such quantity, generated by some black-box machine learning routine. Clearly, we are free to apply many decision rules to $\gamma(X) \in [0,1]^d$, the most natural one being the $\argmax$. However, it is fairly easy to construct examples where even unbiased estimators may perform arbitrarily badly.
 
     To illustrate this point, let $\varepsilon$ be an arbitrarily small parameter, and consider two deterministic arms, one with value $1$ and the other with value $\nicefrac 1{\eps}$, while the context $X$ is drawn independently and uniformly in $[0,1]$.
     Denote with $\gamma$ the estimator that is always correct on the first action, but outputs $\nicefrac 1{\eps^2}$ for the second arm if $X \in [0,\eps]$ and $0$ otherwise. By definition, $\gamma$ estimates the first arm perfectly; for the second arm (whose value is deterministically $\nicefrac{1}{\eps}$), $\gamma$ has value $0$ with probability $1-\eps$ and $\nicefrac{1}{\eps^2}$ with the remaining probability. Consequently, the expected value of the predictor for the second arm is exactly $\nicefrac{1}{\eps}$, so that the predictor $\gamma$ is indeed unbiased.
     Now observe that the second arm strictly dominates the first, with the gap in utility between the two arms growing arbitrarily large as $\eps \rightarrow 0$. However, making a choice trusting the predictor's values results in choosing the suboptimal first arm with probability $1-\eps$. In particular, the naive decision rule ``always choose the second action'' strictly dominates the apparently optimal one of taking the action with largest predicted value.

    Given a family of candidate decision rules $\cC$, and a black-box predictor $\gamma$, we want to  \emph{multicalibrate} $\gamma$ in such a way that the $\argmax$ over the new predictor performs at least as well as the best decision rule in $\cC$ over the initial predictor $\gamma.$ Note, we do not know the underlying distribution, nor how the predictor $\gamma$ is actually computed, but we would like to modify the predictor in such a way that simply feeding its output in an optimization routine would perform as well as the best decision rule for $\gamma$ within a given class $\cC.$
    
    \subsection{Our Results}
    
        Our contribution in this direction is conceptual: we argue that the right property for a predictor to have in this setting is a version of \emph{multicalibration} \citep{hebert2018multicalibration}, a notion of calibration stemming from the literature on algorithmic fairness. In words, multicalibration requires a predictor to be calibrated on each element of a family of protected sets of contexts. In this application, such a family depends on the structure of the problem at hand and the class of matching algorithms $\cC.$ In particular, instead of requiring an estimator to be unbiased over protected sets, we use multicalibration to ``protect'' events of the type ``a given edge $e$ is chosen by a certain decision $c\in \cC$''.
        {Indeed, for the sake of generality, we consider} the natural optimization task of finding a max-weight matching in an $n$-node graph where the edge-weights are stochastic, and we only have access to them via a black-box predictor $\gamma$. Given a finite class of matching algorithms  $\cC$, in \Cref{thm::main} we show that a suitably multicalibrated predictor $\hgamma$ exists such that computing the $\argmax$ {matching} on the edges predicted by $\hgamma$ is optimal, up to an additive precision $\eps$. {This predictor can be constructed efficiently, with $\tilde{O}\left(\nicefrac{n^{3.5}}{\eps^3}\log|\cC|\right)$\footnote{The notation $\tilde O$ hides terms that are polylogarithmic in $n$, $\nicefrac 1\eps$.} many samples from $\cD$ using techniques from adaptive data analysis~\citep{PreservingValidity}, or with $\tilde{O}\left(\nicefrac{n^{5}}{\eps^4}\log|\cC|\right)$ many samples when implementing boosting by iterating over disjoint parts of the dataset.}

        An alternative to our approach would consist in estimating the expected performance of each    algorithm
        sampling from $\cD$, and then committing to the best one. While this would require fewer samples (i.e., $\tilde{O}(\nicefrac{n^2}{\eps^2} \log|\cC|)$), our modular result enjoys two desirable properties:
        \begin{itemize}
            \item[(i)] we improve on the original black-box predictor, both for the task at hand and in mean square error sense.
            \item[(ii)] we can decide ex-ante what the best algorithm will look like, as long as it is optimal with perfect information, meaning that we can reuse the same one for different learning instances.
        \end{itemize} 

        \begin{figure*}[t!]
        \centering
        \begin{tikzpicture}[
          node distance=1cm and 1cm,
          every node/.style={font=\small},
          box/.style={draw, rectangle, minimum width=2cm, minimum height=1cm, align=center}
        ]
        
        \node[label=below: Pretrained predictor] (input) {$\gamma = \begin{bmatrix}
                   \gamma_{1} \\
                   \gamma_{2} \\
                   \vdots \\
                   \gamma_{m}
                 \end{bmatrix}$};
        
        \node[box, right=of input] (W) {Define $\mathcal{W}$};
        
        \node (C) [box, below=of W] {Class $\mathcal{C}$};
        \node[below=2cm of W] (dummy) {~};
        
        \node[box, right=0.5cm of W] (sample) {Samples from $\mathcal{D}$};
        
        \node[right=0.5cm of sample] (hgamma) {$\hat\gamma$};
        
        \node[draw, fit=(W) (sample) (hgamma), inner sep=5pt, label=above:{Weighted Multicalibration}] (L) {};
        
        \node[right=1cm of L] (output) {Matching};
        
        \draw[->] (C) -- (W);
        \draw[->] (input) -- (W);
        \draw[->] (W) -- (sample);
        \draw[->] (sample) -- (hgamma);
        \draw[->] (hgamma) edge node [above] {$c^{\star}$} (output);
        
        \end{tikzpicture}
        ~
        \caption{Proposed pipeline to obtain a matching: (1) the initial predictor $\gamma$ is post-processed to a multicalibrated $\hat\gamma$, (2) the final matching is chosen according to $c^{\star}$.}
        \label{fig:1}
    \end{figure*}
        
        Stated differently, we only need to consider the  matching algorithms in $\cC$ during the preprocessing phase (in which we multicalibrate the predictor); afterwards, we only need to find the max-weight matching on the adjusted predictor. This means that in the ``test phase'' we only need one algorithm. The procedure is illustrated in \Cref{fig:1}.

        {To see why this is relevant, consider the lifecycle of a real-world, large-scale matching system operating with unknown inputs. The standard design pattern is ``predict-then-optimize'': feeding predicted weights into a standard matching algorithm (e.g., max-weight matching). When developers inevitably identify scenarios where this pipeline makes suboptimal decisions, the temptation is to intervene with manual patches—ad-hoc "IF-THEN" rules that override the solver in specific contexts. While this may be an immediate fix, it introduces significant system complexity and ``unlearned'' logic that becomes technical debt. If the data distribution shifts, these hard-coded patches may degrade performance and require manual removal. Multicalibration offers a rigorous alternative to this patching cycle. Instead of hard-coding a heuristic into the decision logic, the developer can simply include the heuristic as a test function in the multicalibration procedure. This guarantees that the standard system remains competitive with the heuristic, preserving architectural simplicity downstream while ensuring the decision logic remains data-driven and adaptive.}
        
        Finally, we stress that our sample complexity bound is worst-case with respect to the initial predictor $\gamma$. Since the analysis is based on a potential argument that uses the expected mean square error (MSE) as potential, the closer the original predictor is to the Bayes optimal, the fewer samples are needed to multicalibrate. This is realistic for the quality of the predictors used in industry. For further details, in \Cref{thm:sample_complexity}, we provide a refined sample-complexity analysis which depends explicitly on the quality of the initial predictor.



        \paragraph{Beyond Matching.} Our choice of max-weight matching as a running example is just for ease of presentation: our approach is flexible and applies easily to any linear maximization task with deterministic constraints (as in e.g., finding the max-weight independent set in a matroid or learning with rejection). Similarly, with proper adjustments, our analysis goes through even if we cannot solve optimally the underlying problem, but have an approximation routine.

    \subsection{Related Work}
    Multicalibration, originally introduced by~\citet{hebert2018multicalibration}, has garnered significant attention due to its versatility. Beyond the authors' initial analysis of sample complexity and its relationship to boosting, the concept has inspired new notions such as \emph{omnipredictors} \citep{gopalan2022omnipredictors} and \emph{outcome indistinguishability} \citep{DworkKRRY21, GopalanHKRW23}, while demonstrating deep connections to complexity theory \citep{CasacubertaDV24}. Subsequent research has proposed various extensions. \citet{JLRV21} introduce multicalibration requirements for higher moments, while \citet{HaghtalabJ023} situate the concept within multi-objective learning. Most relevant to our work is the framework of \emph{weighted} multicalibration proposed by \citet{GopalanKSZ22}. Their approach utilizes two function classes—a hypothesis class for protected sets and a weight class to adjust probabilistic requirements. We adapt their general multiclass classification framework to define our choice of multicalibration for vector-valued functions. Similarly, \citet{blasiok2024loss} introduce multicalibration on \emph{auditing} functions, which depend on both inputs and predictor outputs—an approach closely aligned with our setting. Finally \citet{HuNRY23} adapt the omnipredictor definition to tackle constrained loss minimization.

    \citet{zhao2021calibrating} investigates decision-making using predicted labels when an initial multi-class classification predictor is post-processed and made appropriately calibrated. They prove that in such scenario the $\argmax$ is then the only rational choice on the resulting predictor. Although their high level motivation is affine to ours, their result are orthogonal to ours, as they do not provide any guarantees that enable a comparison with the decision-making performance of the original predictor when using a generic policy within a given family (which is our benchmark).
    
    In independent and concurrent work, \citet{KiyaniHPR25} build upon \citet{zhao2021calibrating} to investigate a similar setting, when the underlying predictor is calibrated against a generic family $\mathcal H$. Under specific assumptions, they derive an explicit formula for modifying predicted labels ex-post, ensuring that the $\argmax$ (with respect to these modified predictors) is optimal in a minimax sense. They further identify conditions on $\mathcal H$ for the direct $\argmax$ of the predicted labels to be optimal.

    In contrast, we define protected groups based on an input predictor $\gamma$. By leveraging multicalibration, we guarantee performance competitive with the best possible post-processing of $\gamma$, effectively capturing all decision-relevant signals present in the initial model. While robust decision-making often necessitates conservative actions to mitigate worst-case uncertainty, our boosting approach actively refines the model to recover the optimal matching whenever the original predictor contains sufficient information.

\section{Preliminaries}\label{sec:preliminaries}

\textbf{Multicalibration.} Let $\X$ be a feature space and $\Y=[0,1]^d$ the label space. We consider point-label pairs $(x,y)$ sampled from an unknown probability distribution $\cD$ supported on $\X \times \Y$.
Given any predictor $f: \X \rightarrow \Y$, \emph{multicalibration} \citep{hebert2018multicalibration} requires $f$ to well approximate $\E{y|x}$ on average in each set of a given family of \emph{protected} subsets of $\X$. To make it feasible to check this condition in practice, these sets must have some structure (i.e. finite Vapnik-Chervonenkis dimension) or be finite. Additionally, the guarantee should degrade with the probability of these sets, to account for low sample frequency. 

The definition of multicalibration we use in this paper is very general, and comes from adapting to regression the one from \citet{GopalanKSZ22}, which is specialized to classification\footnote{Boosting-like approaches to attain multicalibration are not impacted by the switch in the setting. What changes is that we need to scale when using the same analysis.}. 
More specifically, notice that the notion of protected sets --- normally induced by the hypothesis class --- is in our case absorbed into the weights themselves, that are now allowed to depend directly on $x$.

\begin{definition}[Weighted Multicalibrated predictor]
\label{def:multicalibration}
Given a weight class $\cW \subseteq \{[0, 1]^d\times \X \to \{0,1\}^d\}$ and $\alpha \geq 0$, 
a predictor $f : \X \to [0, 1]^d$ is $(\cW, \alpha)$-multicalibrated if 
for every $w \in \cW$ we have:
\begin{equation}
\label{eq:multicalibration}
\left| \expectation{\cD}{\langle w(f(x), x), y - f(x) \rangle}\right| \leq \sqrt d\cdot \alpha.
\end{equation}
\end{definition}


We highlight that the expectation in the definition is taken with respect to the distribution $\cD$, thus formalizing the intuition that $f$ should be accurate on average, when summing over the contributions from the sets induced by each entry of a $w\in \cW$. Indeed, in our setting, by the tower property of conditional expectations,
the expectation in \Cref{eq:multicalibration} does not change if $y$ is replaced by the Bayes predictor $\E{y|x}$. We observe that the presence of indicator terms (the weights) inside the expectation in \Cref{eq:multicalibration} allows for this condition to be practically relevant, as it is robust against sets of low probability. 

\section{Learning Matchings with Multicalibrated Predictions}
\label{sec:learning_matchings}
In this section, we present our results in the maximum matching framework. We consider a complete undirected graph $G=(V,E)$ and denote with $n$ and $m$ the number of nodes and edges, respectively. We allow the graph to have null weights on the edges, so the assumption of completeness is without loss of generality.

\subsection{Optimization Guarantees}

    Let $\cC$ be a finite and fixed family of algorithms for max-weight matching, and denote with $c^*$ the optimal algorithm for such a combinatorial task (for instance, Edmonds algorithm---see Chapter 26 in~\cite{SchrijverBook}). We investigate a statistical scenario where $(x,y)\in \X\times [0,1]^m$ are sampled i.i.d. from a joint distribution $\cD.$
    The set $\X$ is a generic space of context, while $y_e$ for $e\in E$ represents the random weight of the edge. We are also given a generic predictor $\gamma: \X \to [0,1]^m$, such that $\gamma_e(x)$ is a prediction for $y_e$. Note, we do not make any assumption on $\gamma$.

    We want to build a new predictor $\hgamma$ that enjoys the following property:
    \begin{equation}
    \label{eq:objective}
        \max_{c \in \cC}\E{\!\sum_{e \in M_c(\gamma(x))}\!\!\!\!y_e} \lesssim \E{\!\sum_{e \in M_{c^\star}(\hgamma(x))}\!\!\!\! y_e },
    \end{equation}
    where $M_c(\gamma)$ and $M_c(\hgamma)$ denote the matchings output by decision rule $c$ on input weights $\gamma$ and $\hgamma$ respectively.


Specifically, we define $\hgamma$ as a $(\cW, \alpha)$-multicalibrated predictor with respect to a weight class $\cW$, constructed as follows. 

We introduce $\cW = \cW_1 \cup \{w^\star\}$, where:
\begin{align}
\label{eq:def_weights}
    & w_c(p, x) = (\ind{e\in M_c(\gamma(x))})_e \notag \\
    & \cW_1 = \{w_c\}_{c\in \cC} \notag\\
    & w^\star(p, x) = (\ind{e\in M_{c^\star}(p)})_{e \in E} 
\end{align}

{Each indicator function in this set essentially filters edges based on predictions or contexts, extracting the associated errors.} Notice that the $w_c$'s are constant in the first argument, while the value of $w^\star$ is not affected by the second argument. Intuitively, this construction mirrors the two steps of our theoretical analysis. The set $\cW_1$ ensures that our new predictor $\hat{\gamma}$ faithfully captures the value of the algorithms in $\cC$ that take as input the original predictor $\gamma$. Specifically, for any algorithm $c \in \cC$, if the original predictor suggests a set of edges through $c$, $\hat{\gamma}$ must be unbiased on that set.

Conversely, $w^\star$ ensures \textit{self-consistency}: it forces $\hat{\gamma}$ to be unbiased on the edges selected by the optimal algorithm $c^*$ when driven by $\hat{\gamma}$ itself. By satisfying both conditions simultaneously, the multicalibrated predictor allows us to compare the performance of any of the original decisions on $\gamma$ against the standard optimal policy on $\hat \gamma$.

We claim that if $\hgamma$ is a multicalibrated predictor w.r.t. $\cW$, using $c^{\star}$ in the predictions given by $\hgamma(x)$ is on average (up to an additive constant depending on the $\alpha$ from \Cref{eq:multicalibration}) as good as taking the best $c \in \cC$ according to $\gamma$.

\begin{theorem}
\label{thm::main}
    Let $\eps \in (0,1)$ be any fixed precision and set the multicalibration parameter $\alpha =\nicefrac{\eps}{2\sqrt m}$. If $\hgamma$ is $(\cW, \alpha)$-multicalibrated, it holds that:
    \begin{equation*}
    \max_{c \in \cC}\E{\sum_{e \in M_c(\gamma(x))}y_e} \!\le \eps + \E{\sum_{e \in M_{c^\star}(\hgamma(x))}\! y_e}.
    \end{equation*}
\end{theorem}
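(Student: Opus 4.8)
The plan is to sandwich the pointwise optimality of $c^\star$ between two applications of the multicalibration guarantee of $\hgamma$: one for the weights in $\cW_1$, one for the singleton weight in $\cW_2$. Since $\cC$ is finite, it suffices to fix an arbitrary $c \in \cC$, prove
\[
\E{\sum_{e \in M_c(\gamma(x))} y_e} \;\le\; \eps + \E{\sum_{e \in M_{c^\star}(\hgamma(x))} y_e},
\]
and then take the maximum over $c$.

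For the fixed $c$, first I would apply \Cref{eq:multicalibration} to the weight $w_c \in \cW_1$ sending a context $x$ to the indicator vector $(\ind{e \in M_c(\gamma(x))})_e$ — a legitimate element of $\cW$ since $\gamma$ is a fixed, evaluable function of $x$. With $d = m$ and $\alpha = \nicefrac{\eps}{2m}$, the bound in \Cref{eq:multicalibration} equals $\nicefrac{\eps}{2}$, and rearranging $\bigl|\E{\sum_{e\in M_c(\gamma(x))}(y_e-\hgamma_e(x))}\bigr|\le\nicefrac{\eps}{2}$ gives
\[
\E{\sum_{e \in M_c(\gamma(x))} y_e} \;\le\; \E{\sum_{e \in M_c(\gamma(x))} \hgamma_e(x)} + \frac{\eps}{2}.
\]
The one genuinely combinatorial step is next: for every realization of $x$, the matching $M_{c^\star}(\hgamma(x))$ is a maximum-weight matching of $G$ under the edge weights $\hgamma(x)$ (this is exactly what it means for $c^\star$ to be an exact algorithm), so $\sum_{e \in M_c(\gamma(x))} \hgamma_e(x) \le \sum_{e \in M_{c^\star}(\hgamma(x))} \hgamma_e(x)$ holds pointwise, hence also in expectation. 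Finally I would apply \Cref{eq:multicalibration} once more to the single weight of $\cW_2$, namely $x\mapsto(\ind{e\in M_{c^\star}(\hgamma(x))})_e$, this time evaluated on the output of $\hgamma$ itself, to get $\E{\sum_{e \in M_{c^\star}(\hgamma(x))} \hgamma_e(x)} \le \E{\sum_{e \in M_{c^\star}(\hgamma(x))} y_e} + \nicefrac{\eps}{2}$. Chaining the three displays yields the desired inequality for $c$, and maximizing over $\cC$ concludes.

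The argument is short: essentially all of the design effort has already gone into the choice of $\cW$ in \eqref{eq:def_weights}, so I do not expect a real obstacle. The only point requiring care is keeping the two roles of the weight classes straight — the weights in $\cW_1$ must be evaluated on the original predictions $\gamma(x)$, so that the edge set appearing on the left-hand side is exactly the one we compete against, whereas the weight in $\cW_2$ is evaluated on $\hgamma(x)$, so that it certifies calibration on the edges $c^\star$ actually picks — and, in each of the two invocations, using the multicalibration inequality in the direction that costs an additive $\nicefrac{\eps}{2}$. Measurability of $M_c(\gamma(\cdot))$ and the finiteness of $\cC$ make the remaining bookkeeping (in particular the final maximization) immediate.
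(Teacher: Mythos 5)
Your proposal is correct and is essentially identical to the paper's proof: the same three-step chain (multicalibration on the $\cW_1$ weight evaluated at $\gamma(x)$, pointwise optimality of $c^\star$ under $\hgamma$, multicalibration on the $\cW_2$ weight evaluated at $\hgamma(x)$), each of the two calibration steps costing $\alpha m = \nicefrac{\eps}{2}$. The remark about which argument each weight class is evaluated on matches the paper's intended reading of \Cref{def:multicalibration}.
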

    \begin{proof}
    We suppress $x$ as argument to the predictors, to simplify the notation.
    In the following, $c$ is a generic matching function in $\cC$. We start by arguing that the new predictor $\hgamma$ well estimates the weight of the matching $M_c(\gamma)$, for any $c \in \cC$. We have the following:
\begin{align}
\label{eq:first_term}
    &\E{\sum_e (y_e - \hgamma_e) \ind{e \in M_c(\gamma)}} \notag\\
    =&\E{\langle \underbrace{(\ind{e\in M_c(\gamma)})_e}_{w\in \cW_1\subset \cW}, y - \hgamma \rangle} \le \alpha \sqrt m
\end{align}

    The inner product appearing above is indeed upper-bounded by $\alpha\cdot \sqrt m$ due to our notion of multicalibration and our choice of $\cW_1$. We now relate the expected weight of $M_c(\gamma)$ with respect to $\hgamma$ with the actual expected weight of $M_{c^*}(\hgamma)$ (as measured by $y_e$). 

    \begin{align*}
        &\E{\sum_e \hgamma_e \ind{e \in M_c(\gamma)}}\\
        \le&\, \E{\sum_e \hgamma_e \ind{e \in M_{c^\star}(\hgamma)}} \tag{$c^{\star}(\hgamma)$ optimal for $\hgamma$} \\
        =&\,\E{\sum_e (\hgamma_e-y_e) \ind{e \in M_{c^\star}(\hgamma)}\!}\! \!+\! \E{\sum_e y_e \ind{e \in M_{c^\star}(\hgamma)}\!}\\
        =&\, {\E{\langle \underbrace{(\ind{e\in M_{c^\star}(\hgamma)})_e}_{w^\star\in \cW}, \hgamma-y \rangle}} \!+ \!\E{\sum_e y_e \ind{e \in M_{c^\star}(\hgamma)}}\\
        \le&\, \alpha \cdot\ \sqrt m + \E{\sum_{e \in M_{c^\star}(\hgamma)}\! y_e}.
    \end{align*}
    Note, the last inequality follows by our multicalibration setup. 
    Plugging in the above inequality in \Cref{eq:first_term} yields the following:
\begin{equation*}
    \E{\sum_{e \in M_c(\gamma)}y_e} -\E{\sum_{e \in M_{c^\star}(\hgamma)}\! y_e} \le 2\alpha \sqrt m
\end{equation*}
Since the choice of $c\in \cC$ was arbitrary, the result holds when taking the maximum over $\cC$.
\end{proof}
\subsection{Iterative Boosting Algorithm}



\Cref{alg::weighted_calibration_algo} implements our notion of multicalibration in practice, given access to an i.i.d. dataset from the underlying distribution. This iterative boosting procedure initializes $\hgamma$ as the original $\gamma$ and then, at each iteration $t$: 
\begin{enumerate*}[label=(\arabic*)]
    \item it employs a \textsf{CHECK} function to identify violations of the multicalibration condition for $\hgamma^t$, returning the violating weight function and its sign, and 
    \item it performs a projected gradient descent step on $\hgamma^t$.
\end{enumerate*}
The algorithm is adapted from the procedure described in \citet{GopalanKSZ22}. Note that the function $\operatorname{proj}_{[0,1]^{m}} (x)$ in line 11 denotes the clipped vector whose component $i=1,\ldots ,m$ equals $\min(\max(x_i, 0), 1)$.

\begin{algorithm}
\caption{\textsf{WeightedMC$(\alpha, \cW, \{(x^j, y^j)\}_{j=1}^N, \gamma$})}\label{alg::weighted_calibration_algo}
\begin{algorithmic}[1]
    \STATE $\hgamma^0(\cdot) \gets \gamma$
    \STATE $\eta \gets \nicefrac{\alpha}{2}$
    \STATE $t \gets 0$
    \STATE $D \gets \{x^j, y^j\}_{j=1}^N $
    \WHILE{\text{true}}
        \IF{\textsf{CHECK}$_{\cW,\alpha, \hgamma^t}(D) = \perp$}
            \STATE \textbf{break}
        \ELSE
            \STATE $w_{t+1}, b_{t+1} \gets$ \textsf{CHECK}$_{\cW,\alpha, \hgamma^t}(D)$
            \STATE $\delta_{t+1}(\cdot) \gets b_{t+1} \cdot w_{t+1}(\hgamma^t(\cdot), \cdot)$
            \STATE $\hgamma^{t+1}(\cdot) \gets \operatorname{proj}_{[0,1]^m} \!\left( \hgamma^t(\cdot)\!+ \eta\!\cdot\delta_{t+1}(\cdot) \right)$
            \STATE $t \gets t+1$
        \ENDIF
    \ENDWHILE
    \STATE \textbf{return} $\hgamma^t$
\end{algorithmic}
\end{algorithm}
The \textsf{CHECK} routine serves as the core component of \Cref{alg::weighted_calibration_algo}. It is a learning procedure that takes as input a batch of $N$ i.i.d. data points and searches for a weight function $w \in \mathcal{W}$ such that the estimated average of $\langle w(\hgamma^t(\cdot), \cdot), y - \hgamma^t(\cdot) \rangle$ exceeds $\alpha\cdot \sqrt m$.
Regardless of its implementation, it must satisfy the following properties.
\begin{requirements}[\textsf{CHECK} function]\label{def:check}
Fix $\alpha > 0$ and a predictor $\gamma$. Given a sample $D = \{(x^j, y^j)\}_{j=1}^N$, the routine $\textsf{CHECK}_{\cW,\alpha, \gamma}$ either returns a pair $(w, b) \in \cW \times \{-1,+1\}$ or the symbol $\perp$. Specifically:
\begin{enumerate}
    \item If there exists $w' \in \cW$ such that 
    \[
    \nicefrac{1}{\sqrt m}\,\mathbb{E}_{\cD}\!\left[ \langle w'(\hgamma(x), x), y-\hgamma(x) \rangle \right] \notin [-\alpha, \alpha],
    \]
    then $w \!\neq \! \perp$ and the following holds:
     \begin{align*}
         &\nicefrac{1}{\sqrt m}\rvert \mathbb{E}_{\cD}\!\left[ \langle w(\hgamma(x), x), y-\hgamma(x) \rangle\right] \lvert \ge \nicefrac{\alpha}{2} \text{ and}\\
         &b=\mathsf{sign}\left(\mathbb{E}_{\cD}\!\left[ \langle w(\hgamma(x), x), y-\hgamma(x) \rangle\right]\right),
         \end{align*}
         where $\mathsf{sign}(x)= x/|x|$;
    \item If $w = \perp$, then we have that for all $w' \in \cW$, 
    $\nicefrac{1}{\sqrt m}\, \mathbb{E}_{\cD}\!\left[ \langle w'(\hgamma(x), x), y-\hgamma(x) \rangle \right] \in [-\alpha, \alpha]$.
\end{enumerate}
\end{requirements}

\subsection{Analysis of \Cref{alg::weighted_calibration_algo}}
We now analyze the theoretical guarantees of the proposed algorithm. First observe that, upon convergence, the properties described in Requirements~\ref{def:check} ensure that the final output satisfies the multicalibration conditions. 
It remains to assess the algorithm's convergence and sample complexity. Our results depend on two factors:
\begin{enumerate*}[label=(\arabic*)]
    \item the availability of an upper bound $r$ to the MSE of the initial predictor $\gamma$;
    \item the implementation of the CHECK routine.
\end{enumerate*}
Below, we focus on (1), deferring the discussion of how modifying \textsf{CHECK} improves sample complexity to the end of this section.

Convergence in a finite number of iterations is established via a potential argument and holds regardless of whether the estimate $r$ is available.
Leveraging $r$ requires a more fine-grained analysis, which makes the upper bound on the number of iterations directly proportional to $r$ and leads to a more general result.
Our main findings are summarized in \Cref{thm:sample_complexity}. We note that the sample complexity bound in the theorem takes into account a statistical implementation of \textsf{CHECK} which utilizes a distinct subset of the data for each call.

\begin{theorem}
\label{thm:sample_complexity}
    Fix any failure probability $\delta\in (0,1)$, precision $\eps \in (0,1)$, and initial predictor $\gamma$. Assume that $\gamma$ has mean square error $\E{\lVert \gamma - \E{y | x} \rVert_2^2} \le r$ for some known $r$. Then \Cref{alg::weighted_calibration_algo} with $\alpha\in O(\nicefrac{\eps}{n})$ enjoys the following properties:
    \begin{itemize}
        \item It converges in $O\left(\nicefrac{r}{n\alpha^2}\right)$ iterations of the \emph{while} loop;
        \item It requires $O\left( \frac{r n^3 \cdot \log{\left(\frac{nr\lvert \cC\rvert}{\eps \delta}\right)}}{\eps^4}\right)$ i.i.d. samples from $\cD$;
        \item With probability $(1-\delta)$ it returns a predictor $\hgamma$ s. t.
            \begin{equation*}
    \max_{c \in \cC}\E{\sum_{e \in M_c(\gamma(x))}y_e} \!\le \eps + \E{\sum_{e \in M_{c^\star}(\hgamma(x))}\! y_e}.
    \end{equation*}    
    \end{itemize}
\end{theorem}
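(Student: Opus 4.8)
The plan is to combine the structural guarantee of \Cref{thm::main} with a quantitative analysis of \Cref{alg::weighted_calibration_algo}, controlling two separate sources of error: the iteration count (which governs how long we run), and the statistical error incurred because \textsf{CHECK} only has access to $N$ samples rather than to $\cD$ itself.

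\textbf{Step 1: Bounding the number of iterations via a mean-square-error potential.} I would use $\Phi^t = \E{\lVert \hgamma^t(x) - \E{y|x}\rVert_2^2}$ as the potential. Clearly $\Phi^0 = r$ and $\Phi^t \ge 0$ always, so it suffices to show each iteration decreases $\Phi$ by a fixed amount. Fix an iteration where \textsf{CHECK} returns $(w_{t+1}, b_{t+1})$; write $\delta_{t+1} = b_{t+1} w_{t+1}(\hgamma^t)$. Ignoring the projection for a moment (projection onto the convex set $[0,1]^m$ containing $\E{y|x}$ only helps, by non-expansiveness), expand
\begin{equation*}
\Phi^{t} - \Phi^{t+1} = \E{\lVert \hgamma^t - \E{y|x}\rVert^2 - \lVert \hgamma^t + \eta \delta_{t+1} - \E{y|x}\rVert^2} = -2\eta\, \E{\langle \delta_{t+1}, \hgamma^t - \E{y|x}\rangle} - \eta^2 \E{\lVert \delta_{t+1}\rVert^2}.
\end{equation*}
By the tower property, $\E{\langle \delta_{t+1}, \hgamma^t - \E{y|x}\rangle} = \E{\langle \delta_{t+1}, \hgamma^t - y\rangle} = -b_{t+1}\E{\langle w_{t+1}(\hgamma^t), y - \hgamma^t\rangle}$, whose absolute value is at least $m\alpha/2$ by property (1) of \Cref{def:check}, and the sign works out so that the cross term contributes $+2\eta \cdot (m\alpha/2) = \eta m \alpha$. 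Since $\delta_{t+1} \in \{0,1\}^m$, we have $\E{\lVert\delta_{t+1}\rVert^2} \le m$. With $\eta = \alpha/2$ this gives $\Phi^t - \Phi^{t+1} \ge \eta m\alpha - \eta^2 m = m\alpha^2/2 - m\alpha^2/4 = m\alpha^2/4$. Hence the number of iterations is at most $T \le 4r/(m\alpha^2)$, matching the claimed bound. (Here I am running \textsf{CHECK} against the true distribution; the next step handles the fact that we only estimate these expectations.)

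\textbf{Step 2: From population \textsf{CHECK} to empirical \textsf{CHECK} via uniform convergence.} Replace all population expectations by empirical averages over $N$ samples. For each $w \in \cW$ and each candidate iterate $\hgamma^t$, the quantity $\frac1m\langle w(\hgamma^t(x)), y - \hgamma^t(x)\rangle$ is an average of $m$ terms each bounded in $[-1,1]$, so it lies in $[-1,1]$; a Hoeffding/union bound over $|\cW| = |\cW_1| + |\cW_2| \le |\cC| + 1$ weight functions and over the at most $T$ iterations shows that with probability $1-\delta$ all empirical estimates are within $\alpha/4$ of their population values, provided $N = \Omega\big(\frac{1}{\alpha^2}\log\frac{T|\cC|}{\delta}\big)$. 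On this good event: (a) the empirical \textsf{CHECK} with tolerance parameter $\alpha/2$ behaves like an exact population \textsf{CHECK} with tolerance $\alpha$ — whenever it fires, the true population inner product has magnitude $\ge m\alpha/4$ with the correct sign, so Step 1's descent still gives progress $\ge m\alpha^2/16$ per step (adjusting constants, e.g. taking $\eta = \alpha/4$), hence $T = O(r/(m\alpha^2))$; and (b) when it halts, every population inner product has magnitude $\le m\alpha$ — wait, we need $\le \alpha m$ exactly to invoke \Cref{thm::main}, so I would calibrate the empirical tolerance and the slack in the union bound so that termination certifies $(\cW, \alpha)$-multicalibration. Finally plug $\alpha = \eps/(2m)$ from \Cref{thm::main} into both the iteration bound, $T = O(r/(m\alpha^2)) = O(rm/\eps^2) = O(rn^2/\eps^2)$ since $m = \Theta(n^2)$, and the sample bound $N = \Omega(T \cdot \frac{1}{\alpha^2}\log\frac{T|\cC|}{\delta})$, which would work out — after substituting $m = \Theta(n^2)$, $\alpha = \eps/(2m) = \Theta(\eps/n^2)$, and $T = O(rn^2/\eps^2)$ — to $O\big(\frac{r n^6}{\eps^4}\log\frac{r|\cC|}{\eps\delta}\big)$, as claimed. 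The third bullet then follows by applying \Cref{thm::main} to the returned $(\cW,\alpha)$-MC predictor $\hgamma$.

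\textbf{Main obstacle.} The delicate part is the bookkeeping in Step 2: we need the \emph{same} $N$ samples to simultaneously give accurate \textsf{CHECK} estimates at \emph{every} iterate $\hgamma^t$ produced by the run, even though those iterates are data-dependent. The clean way around this is to observe that each $\hgamma^t$ is determined by the sequence of pairs $(w_i, b_i)_{i \le t}$ returned so far, so the reachable set of iterates after $T$ steps has size at most $(2|\cW|)^T$; a union bound over this (finite but exponential-in-$T$) set inflates the sample complexity only by a factor $T\log(2|\cW|)$ inside the logarithm, which is absorbed by the $\tilde O$/explicit-log form of the stated bound. One must also handle the projection step carefully — argue that $\operatorname{proj}_{[0,1]^m}$ is a $1$-Lipschitz map fixing $\E{y|x} \in [0,1]^m$, so it never increases $\Phi$ — and verify the sign conventions in property (1) of \Cref{def:check} line up so the gradient step moves $\hgamma^t$ toward, not away from, the Bayes predictor. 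None of these is conceptually hard, but getting the constants consistent between the tolerance in \textsf{CHECK} ($\alpha/2$), the uniform-convergence slack ($\alpha/4$), the step size $\eta$, and the final multicalibration guarantee ($\alpha$) requires care.
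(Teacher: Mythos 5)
Your proposal is correct and follows essentially the same route as the paper: the identical mean-square-error potential argument (with the same $\eta m\alpha - \eta^2 m$ per-step decrease and step size $\eta=\alpha/2$) for the iteration bound $T\le 4r/(m\alpha^2)$, followed by Hoeffding plus a union bound to implement \textsf{CHECK} statistically, and finally the substitution $\alpha=\Theta(\eps/m)$, $m=\Theta(n^2)$. The only divergence is bookkeeping in Step 2: the paper multiplies the per-call sample complexity by the number of calls (effectively fresh batches per iteration), whereas you reuse one sample set and union-bound over the $(2|\cW|)^T$ reachable iterates --- your version is, if anything, more careful about the data-dependence of $\hgamma^t$ and about calibrating the empirical threshold so that a fired \textsf{CHECK} certifies a population violation of magnitude $\Omega(\alpha)$ with the correct sign, and both accountings land on the same final bound.
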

\begin{proof}
    As a first step, we prove the bound on the number of iterations required for convergence. We define the value of the potential $\phi$ at the $t^{\text{th}}$ iteration as
    \begin{equation}
            \label{eq:potential}
            \phi(\hgamma^t) = \E{\lVert \hgamma^t - \E{y | x}||_2^2 }
    \end{equation}
    By expanding this squared norm and using that the value of a matching is bounded by $\sqrt{m}$, we obtain that the decrease in potential between iteration $t$ and $t+1$ satisfies the following:
    \begin{align*}
        \phi(\hgamma^t) - \phi(\hgamma^{t+1}) &\ge \alpha \sqrt m \eta - \eta^2 \sqrt m\\
        &=\nicefrac{\sqrt m\alpha^2}{4}.\tag{As $\eta = \nicefrac{\alpha}{2}$}
    \end{align*}
    Now, let $T$ be the number $T$ of iterations of the loop. To bound $T$ we need to solve the inequality $\phi(\hgamma^0) - T \cdot \nicefrac{\sqrt m\alpha^2}{4} \ge 0$. 
Since $\phi(\hgamma^0)=\phi(\gamma) = r$, the number of iterations linearly shrinks with $r$, and we get the first of our desired results.

Now we want to bound the sample complexity of statistically implementing the oracle call to \textsf{CHECK}. Let $N' \leq N$ be the number of samples from $D$ consumed by \textsf{CHECK} in a single call. For any $w\in \cW$, let $z^j_w = \nicefrac{1}{\sqrt m}\langle w(\hgamma^t(x^j), x^j), y^j-\hgamma^t(x^j)\rangle$, denoting its empirical average and expectation with $\hat{z}_w$ and $\bar z_w$, respectively. Clearly, $z^j_w \in [-1, 1] \, \,\forall j\in [N'], w\in \cW$, due to our normalization by $\sqrt m$, again using that the value of a matching is bounded by $\sqrt m$. We claim that, with enough samples, the procedure that returns the index of any $\hat z_w \notin [-\nicefrac{\alpha}{2}, \nicefrac{\alpha}{2}]$ is a correct implementation of a call to \textsf{CHECK}, with probability $1-\delta_0$, where $\delta_0$ is a failure parameter that we will fix later.

By Hoeffding's Inequality and a union bound, we have: 
\begin{align*}
    \Prob{\exists w\in \cW: \lvert \hat{z}_w - \bar z_w\rvert \ge \nicefrac{\alpha}{2}} &\le 2 \lvert \cW \rvert e^{-\nicefrac{1}{8}N'\alpha^2}= \delta_0.
\end{align*}
Therefore, setting $N' \in O(\log(\nicefrac{\lvert\cW\rvert}{\delta_0})/{\alpha^2})$ suffices to concentrate every $\hat z_w$ in an interval of length $\alpha$ around its mean $\bar z_w$, with probability $1-\delta_0$. This implies that, with the same probability, if a single $\bar z_w \notin [-\alpha, \alpha]$, $\hat z_w \notin [\-\nicefrac{\alpha}{2}, \nicefrac{\alpha}{2}]$ and its weight $w$ will be returned by the procedure. This proves our claim.

Now, as already argued, we call \textsf{CHECK} at most $\nicefrac{4r}{\sqrt m\alpha^2}$ times; therefore we need to union bound over the correctness of every call, meaning that we need to consider $\delta_0 = \nicefrac{\sqrt m\alpha^2\delta}{4r}$. Multiplying the number of calls by the sample complexity of each call, we get a sample complexity: 
\begin{equation*}
    N \in O\left(\frac{r\log\left(\nicefrac{\lvert \cW\rvert r}{\sqrt m\alpha^2\delta}\right)}{\sqrt m\alpha^4}\right).
\end{equation*}
Putting $\alpha = \nicefrac{\eps}{2\sqrt m}$ as required in \Cref{thm::main}, we get the claimed result.
\end{proof}
\paragraph{Discussion of the sample complexity.}
\Cref{thm:sample_complexity} implies that the worst-case sample complexity (i.e. the one obtained with $r \in O(n^2)$) is $\tilde{O}(\nicefrac{n^{5}}{\eps^4})$. Realistically, one can expect that predictors (like $\gamma$ in our setting) used in the industry attain a reasonable small mean-squared error. Consequently, if the target precision in \Cref{thm::main,thm:sample_complexity} is $\eps$, it is natural to require the mean-squared error on each edge to be  $O(\eps^2)$. Under this realistic assumption, the sample complexity reduces to $\tilde{O}(\nicefrac{n^{5}}{\eps^2})$.

We mention that the worst-case dependency on $\eps$ can be improved to $\nicefrac{1}{\eps^3}$ (up to additional polylogs) by resorting to adaptive data analysis techniques~\citep{PreservingValidity}, where the $\textsf{CHECK}$ query is not implemented by considering disjoint subsets of the data, but by repeatedly evaluating a randomized implementation of the query on the same data. Clearly, this induces dependencies between different iterations, and thus we say these queries are adaptively chosen. As for \citet{GopalanKSZ22}, it is possible to cast  $\textsf{CHECK}$ as a routine returning the maximizer of the calibration error over $\cW$ and use an immediate adaptation of Corollary 6.4 from \citet{BassilyNSSSU21}. The analysis is then the same as the proof of \Cref{thm:sample_complexity}, just with a more sample efficient implementation of \textsf{CHECK}. Considering thus again the bound on the number of iterations\footnote{Now, every iteration is a query adaptively chosen based on the past ones.}, we obtain the following sample complexity bound: 

$$
O\left(\frac{n^{2.5}\sqrt{r}\log(\nicefrac{n\lvert \cC\rvert} {\eps})\log^{\nicefrac{3}{2}}(\nicefrac{n}{\eps \delta}))}{\eps^3}\right).
$$
This implementation relies on the exponential mechanism~\citep{mcsherry2007mechanism} over $\cW$, ensuring that the $O(N \lvert  \cW\rvert)$ iteration running time remains comparable to the one of \Cref{alg::weighted_calibration_algo} with a simpler implementation of \textsf{CHECK}, which requires $O(N'\lvert \cW\rvert)$ time per iteration.

\par
\section{Other Models}\label{sec:other_models}

    In this Section, we briefly explain how to generalize our approach to other linear optimization problems. 

    \paragraph{Finding the best action.} This is the problem discussed in the introduction: there are $m$ actions, each characterized by a value $y_i$, and the learner has access to a context vector $x \in \X$ that is drawn from a joint distribution $\cD$ over $\X \times [0,1]^m$. Since choosing the best action is equivalent to finding the max-weight matching in a star graph of $m+1$ nodes (a central node linked to the $m$ action nodes), our results for matching do carry over immediately, with an improved (worst-case) sample complexity of $\tilde O(\nicefrac{\sqrt{m}}{\eps^3}\log |\cC|)$ since only one edge is active for every decision output and we thus do not need to scale down the multicalibration error $\alpha$ as we do for matching. Note that this model captures the standard classification task, {modulo projecting to the simplex instead than to $[0, 1]^m$}.

    \paragraph{Learning with Rejection.} Consider now the supervised learning framework with \emph{rejection} \citep{HendrickxPPMD24}, where the learner can either predict the label of the example seen, or ``reject'' it. Typically, the cost of rejecting a point is smaller than misclassifying it. If the underlying learning task is binary classification, this can be embedded in the best-action framework by assuming the existence of three actions: ``predict YES'', ``predict NO'', and ``reject''. {The predictor then correspondingly has three outputs, the first two modeling conditional probabilities and the third fixed to a specific value, based on the cost of rejecting; this third output is never updated, while the first two are projected back to the 1-dimensional simplex after every update, if needed.} Since we only have a constant number of actions, we only need $\tilde{O} (\nicefrac 1{\eps^3} \log |\cC|)$ samples.

    \paragraph{Max-Weight Base.} The basic setting with $n$ elements, each characterized by a random value $y_i \in [0,1]$ is similar to the ones above, with the difference that (i) a matroid\footnote{A family $\cF$ of subsets is called a matroid if (i) $\empty \in \cF$, (ii) if $A \in \cF$ and $B \subseteq A$, then $B \in \cF$, and (iii) if $A,B \in \cF$ with $|A|>|B|$, then $\exists a \in A$ such that $B \cup \{a\} \in \cF.$ In particular, maximizing a linear function with matroid constraints can be done efficiently using the greedy algorithm \citep{SchrijverBook}.} is defined on the elements and (ii) the algorithm designer can select any subset $S$ of elements that is independent with respect to the matroid. Let $\mathrm{r}$ be the rank of the matroid\footnote{The cardinality of any set that is maximal with respect to the matroid property has a fixed cardinality that is called the rank.}, then we can carry over the same analysis as in matching, with the difference that \Cref{eq:first_term} is upper bounded by $\alpha \cdot \mathrm{r}$. This implies that setting $\alpha = \nicefrac{\varepsilon}{\mathrm{r}}$ is enough to get the same approximation result, for an overall sample complexity of $\tilde O (\nicefrac{\sqrt{n}\cdot \mathrm{r}^{2.5}}{\eps^3} \log |\cC|)$.

\section{Empirical Evaluation}
\label{sec:empirical_evaluation}

In this section, we empirically evaluate the proposed method in two synthetic settings characterized by model misspecification: \textit{Finding the best action}, or \textit{Best Action}, and Maximum Matching, or \textit{Max Matching}. This misspecification setup is motivated by the fact that standard loss minimization guarantees global fidelity but may fail to capture the local signals required for specific downstream optimization tasks (similar to the local biases multicalibration was originally designed to correct in fairness contexts). To model this phenomenon, we generate data using a quadratic ground truth but train a linear predictor to convergence. This ensures that the latter is the ``best possible'' linear predictor (minimizing MSE) yet remains structurally imperfect—precisely a scenario our procedure is designed to address.

\subsection{Dataset Construction}
In this subsection, we describe how we generate the synthetic dataset for our experiments. 
In all the experiments, we consider $10$-dimensional feature vectors: $\X \subseteq \mathbb R^{10}$. We initialize weight matrices $\mathbf{W}_1, \mathbf{W}_2 \in \mathbb{R}^{m \times 10}$ and a bias vector $\mathbf{b} \in \mathbb{R}^m$, where all entries are drawn independently from a standard normal distribution.
To create a dataset of $N$ samples, we repeat the following procedure. First, an input vector $x \in \mathbb{R}^{10}$ is drawn from $\mathcal{N}(\mathbf{0}, \mathbf{I})$, where $\mathbf{I}$ is the identity matrix in dimension $10$. Then, a raw label vector $\tilde{y} \in \mathbb{R}^m$ is computed as:
    \[
        \tilde{y} = \mathbf{W}_1 \mathbf{x} + \frac{1}{2}\mathbf{W}_2 (\mathbf{x}^2) + \mathbf{b} + z,
    \]
where $\mathbf{x}^2$ denotes the element-wise square of $\mathbf{x}$, and $z$\footnote{$z$ is drawn from a normal distribution with standard deviation $0.1$.} is a noise vector. The target vector $y$ is then obtained by standardizing $\tilde{y}$ (normalizing it to have zero mean and unit variance) and applying a sigmoid function to each component.
The base predictor $\gamma$ (that we want to improve with our multicalibration pipeline) is a linear predictor, again with a sigmoid function on top, which is trained with gradient descent on $\approx 10^4$ samples generated as above.

\paragraph{Finding the best action.}
Here, each weight corresponds to one of $m$ actions. We consider the setting where $m\in \{4, 16, 64, 256\}$. The family of policies $\cC$ we compare against is constructed by picking the $\argmax$ among re-scaled predictions. Formally, each rule $c \in \cC$ is parameterized by a multiplier vector $\lambda(c) \in [0,1]^m$, so that the generic $c$ chooses the action that maximizes the product between $\lambda_i$ and the predicted weight for action $i$. In the experiments, we consider the discrete grid of multipliers $\Lambda=\{0, 0.25, 0.5, 0.75, 1\}^m$. The final set of weight vectors used to define $\cC$ is obtained as follows: if $|\Lambda| \le 1024$, we take the whole $\Lambda$ as the set of multipliers (i.e.\ we consider all possible combinations); else we sample $1024$ vectors uniformly at random from $\Lambda$. When doing this, we disregard all those combinations that induce the same decision rule, i.e. scalar multiples of a given multiplier vector.

\paragraph{Maximum matching on graphs.}
Here each weight corresponds to an edge in a complete graph, where the number of edges is $m=45$ (corresponding to the complete graph on $n=10$ nodes). The family of policies $\cC$ is constructed similarly to the previous case, with the difference that the adjusted weights (multipliers times predictions) are fed into a max-weight matching algorithm.

\subsection{Implementation Details}
We analyze the performance according to three metrics, which are Monte Carlo-estimated via $4000$ samples:
\begin{enumerate} 
    \item The gap between the performance of the $\argmax$ onto $\hat \gamma$ and that of the best $c \in \cC$ onto the initial predictor $\gamma$, i.e. the following difference, the \textit{Utility Gap}:
    \begin{equation*}
    \E{\sum_{e \in M_{c^\star}(\hgamma(x))}\! y_e}-\max_{c \in \cC}\E{\sum_{e \in M_c(\gamma(x))}y_e} 
    \end{equation*}
    \item the difference between the MSE of the initial predictor and the final one (normalized by the number of entries)
    \item the improvement in the performance of $M_{c^\star}(\cdot)$, when applied on $\gamma$ or $\hgamma$. 
\end{enumerate}

For a fixed output dimensionality and weight set $\{\mathbf{W}_1, \mathbf{W}_2, \mathbf{b}\}$, we analyze the effects of varying the sample size and target precision $\varepsilon$ within a call to the \textsf{CHECK} routine, which is implemented by consuming disjoint parts of the dataset at each of the $1024$ iterations.

We stress that every time we change the dimensionality $m$, we draw again $\mathbf{W}_1, \mathbf{W}_2$ and $\mathbf{b}$, so our results for \textit{Best Action} do not rely on a specific draw of the weights. Since for matching we fix $m$, we average the results over $5$ seeds.

\subsection{Results} 
In \Cref{fig:four_grid}, we show the results for both experimental settings. The code for this evaluation is available on GitHub: \href{https://github.com/facebookresearch/multicalibration_for_matching}{github.com/facebookresearch/multicalibration\_for\_matching}. Once the \textsf{CHECK} routine yields reliable estimates, the procedure rapidly improves upon the original predictor and the \textit{Utility Gap} vanishes, provided convergence is achieved.

In the \textit{Best Action} setting, a smaller number of arms is associated with a lower initial mean squared error. Moreover, the $\Lambda$ grid is more representative and updates are less sparse, facilitating faster convergence. In all the settings, convergence is further accelerated by moderately high values of $\eps$, which induce more aggressive updates \footnote{Recall that the learning rate of the gradient descent step is proportional to the precision.} compared to exponentially lower $\eps$ regimes. This dynamic is further detailed in the evolution of MSE during the multicalibration procedure (see Appendix \ref{app:conv_mse}).
In general, for low $\eps$, the update dynamics are smoother, convergence takes longer and a small sample size is often insufficient for such fine-grained errors\footnote{Recall that for matching the threshold for \textsf{CHECK} violations is further shrunk down, since the considered $\alpha$ is $\eps$ scaled down by the matching size. For \textit{Best Action}, we instead do not need to scale, as explained in \Cref{sec:other_models}.}.

It should be noted that the empirical results are more favorable than the worst-case theory in two respects. First, when convergence is at least partially attained, the utility gap becomes positive, not just close to zero from the negative side as the theory guarantees, meaning that the updated predictor can induce a decision rule that outperforms the best rule in \(\cC\) applied to the original predictor. Second, the sample complexity observed in our experiments is substantially lower than the worst-case one we have derived with strict probabilistic guarantees and when disregarding assumptions on the initial mean squared error. For example, for \textit{Max Matching}, $\eps = \nicefrac 14$ reaches positive utility gap when the number of samples consumed is $2^8\cdot 1024$, while the theoretical worst-case total sample complexity to achieve such a result is at least $10^5\cdot 2^8\cdot \log(1024)$, which is hundreds of times higher. Similarly, for the same setting, $\eps=\nicefrac 18$ reaches positive utility gap when we consume $1024\cdot 2^{10}$ samples, but the theoretical guarantee would be at least $10^5\cdot 2^{12}\cdot \log(1024)$, which is thousands of times higher. 

As stressed in our analysis in terms of mean squared error convergence, one of the relevant factors here is a reduced number of iterations to get a contained utility gap. As an example, consider the \textit{Best Action} setting with $m=256$, $\eps = \nicefrac{1}{16}$, using $2^{10}$ samples in each call to \textsf{CHECK}. For this setup, the routine achieves convergence and the required utility gap guarantee with $1024$ iterations, but the worst-case theoretical bound on the number of iterations is $4\cdot 256\cdot 2^8$, which is approximately $260$ times as much.

Notably, noisy estimates from \textsf{CHECK} may partly explain the curve in \Cref{fig:sub3} for the highest $\eps$, in the \textit{Max Matching} setting. High variance in the estimation of small multicalibration errors can increase the chance of detecting violations, triggering updates even when the real signal is weak. However, once the \textsf{CHECK} sample size is sufficiently large, this effect diminishes, and the observed behavior is governed by genuine violations rather than estimation noise.

\section{Limitations}
While the theoretical analysis in \Cref{sec:learning_matchings} provides provable guarantees for our post-processing pipeline, its bounds are subject to certain statistical and computational limitations.

\begin{figure*}[t]
    \centering
    \begin{subfigure}[t]{0.8\textwidth}
        \centering
        \includegraphics[width=\linewidth]{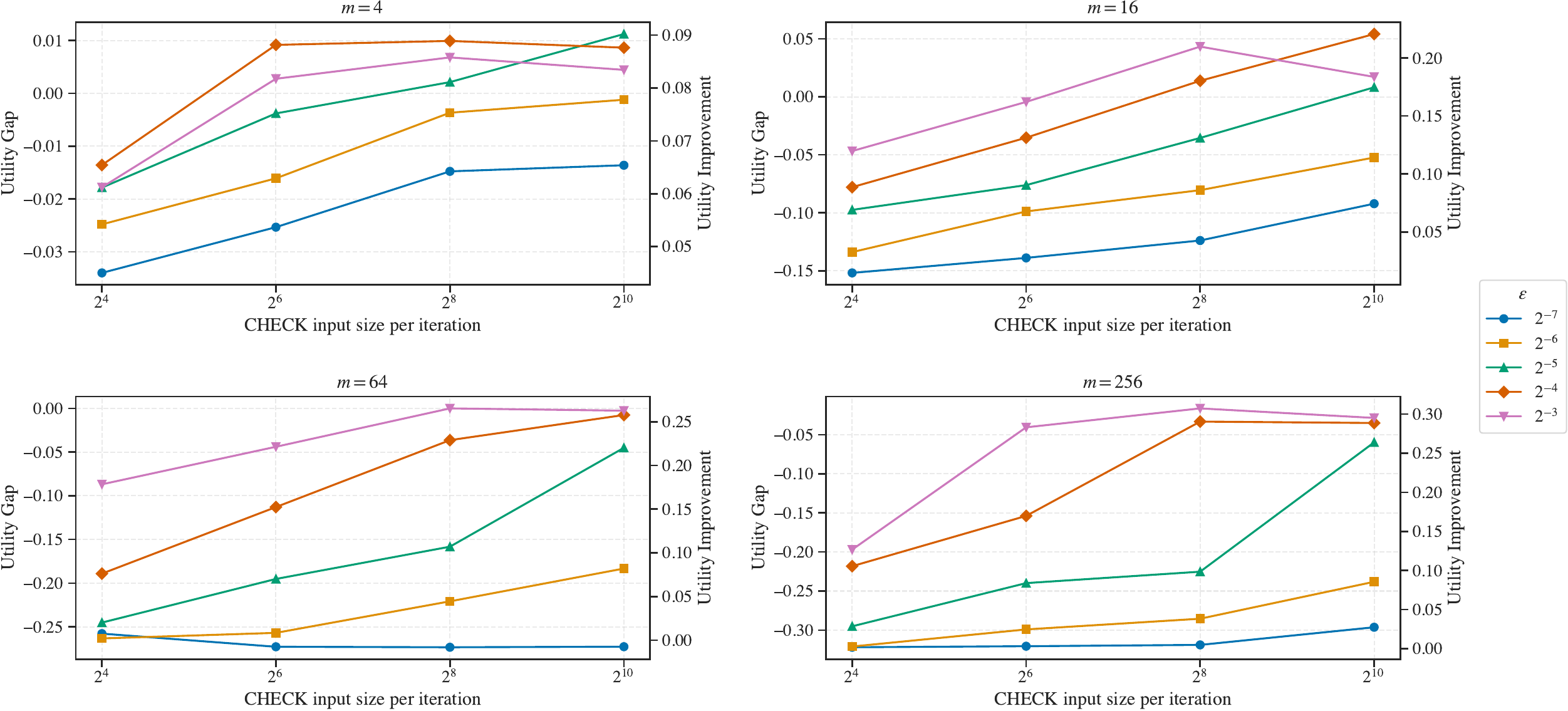}
        \caption{Utility Gap and Utility Improvement for Best Action across different $m$'s.}
        \label{fig:sub1}
    \end{subfigure}
    
    \vspace{1em}
    
    \begin{subfigure}[t]{0.7\textwidth}
        \centering
        \includegraphics[width=\linewidth]{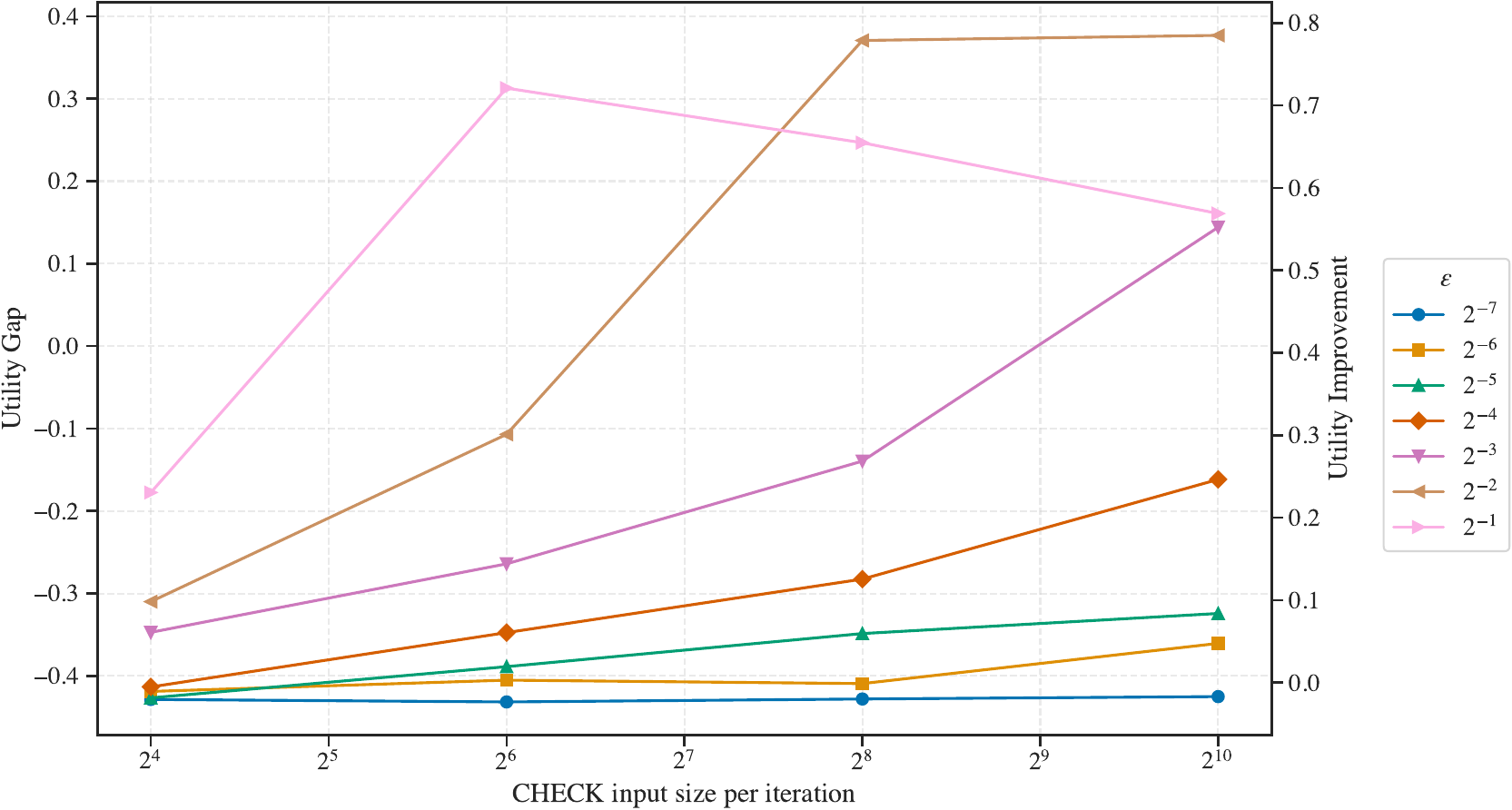}
        \caption{Gap and Improvement for Max Matching, with $m=45$ and averaging over seeds.}
        \label{fig:sub3}
    \end{subfigure}
    
    \caption{\textbf{Utility Gap and Utility Improvements} Improvement and Gap are a constant amount away from each other, so we plot them together and have one vertical axes for each. Improvement is always positive and the gap shrinks to $0$ or becomes positive when $\eps$ is not too low and the algorithm converges. The x-axis, for the number of samples given to \textsf{CHECK} at every iteration, is in log-scale (base 2).}
    \label{fig:four_grid}
\end{figure*}

From an algorithm design perspective, the obtained guarantee is w.r.t. to a class of matching algorithms $\cC$ decided in advance. Statistically, the sample complexity exhibits a polynomial dependence on $n$ and a $\nicefrac{1}{\eps^4}$ dependence, with this possibly improved to $\nicefrac{1}{\eps^3}$ using more computationally intensive differential privacy techniques, as we wrote in \Cref{sec:learning_matchings}. Computationally, the number of boosting iterations can be high in the worst case (i.e. $r=m$), scaling cubically in $n$ and quadratically in $\nicefrac{1}{\eps}$. The requirement to enumerate over $\cW$ and compute matching algorithms for each sample may also pose a bottleneck without parallelization.

Nonetheless, as highlighted in our experimental section, these results are frequently overconservative, and practical performance can exceed these worst-case limits. For instance, our proofs show that a low initial MSE $r$ or the identification of larger than required violations by \textsf{CHECK} can significantly reduce iteration counts and running time, and thus sample complexity.


\section*{Acknowledgments}
    The work of SDG, SF, FF, SL, and CS is partially supported by the Meta/Sapienza project on ``Online Constrained Optimization and Multi-Calibration in Algorithm and Mechanism Design''. A significant part of this work was done while MR was at Sapienza University of Rome.

\newpage
\bibliographystyle{icml2026}
\bibliography{references}

\newpage
\appendix
\onecolumn
\section{Convergence and Mean Squared Error plots}
\label{app:conv_mse}

\begin{figure}[H]
    
    \begin{subfigure}[b]{\textwidth}
        \centering \includegraphics[width=\linewidth]{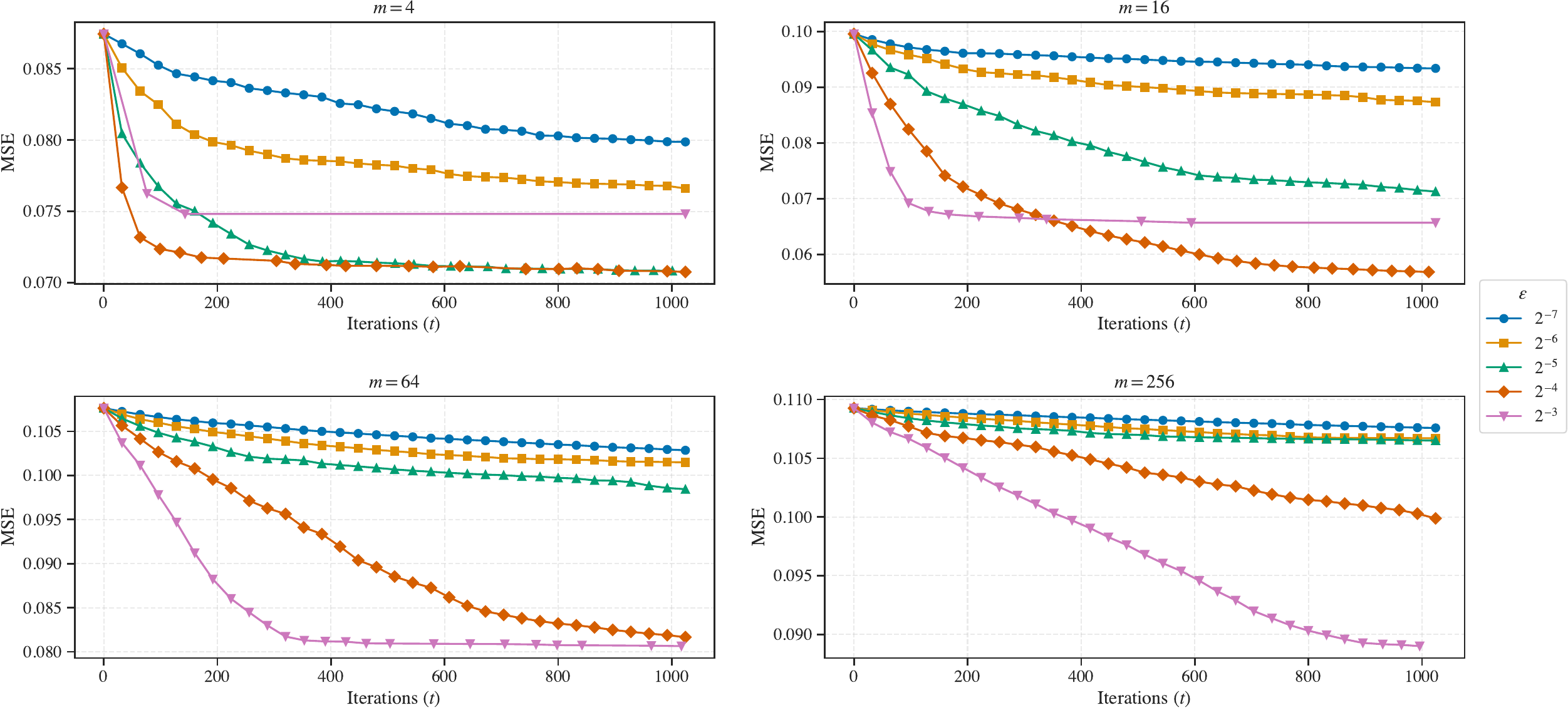}
        \caption{\textbf{Mean Squared Error dynamic for \textit{Best Action}.}}
    \end{subfigure}
    \begin{subfigure}[b]{\textwidth}
        \centering \includegraphics[width=0.75\linewidth]{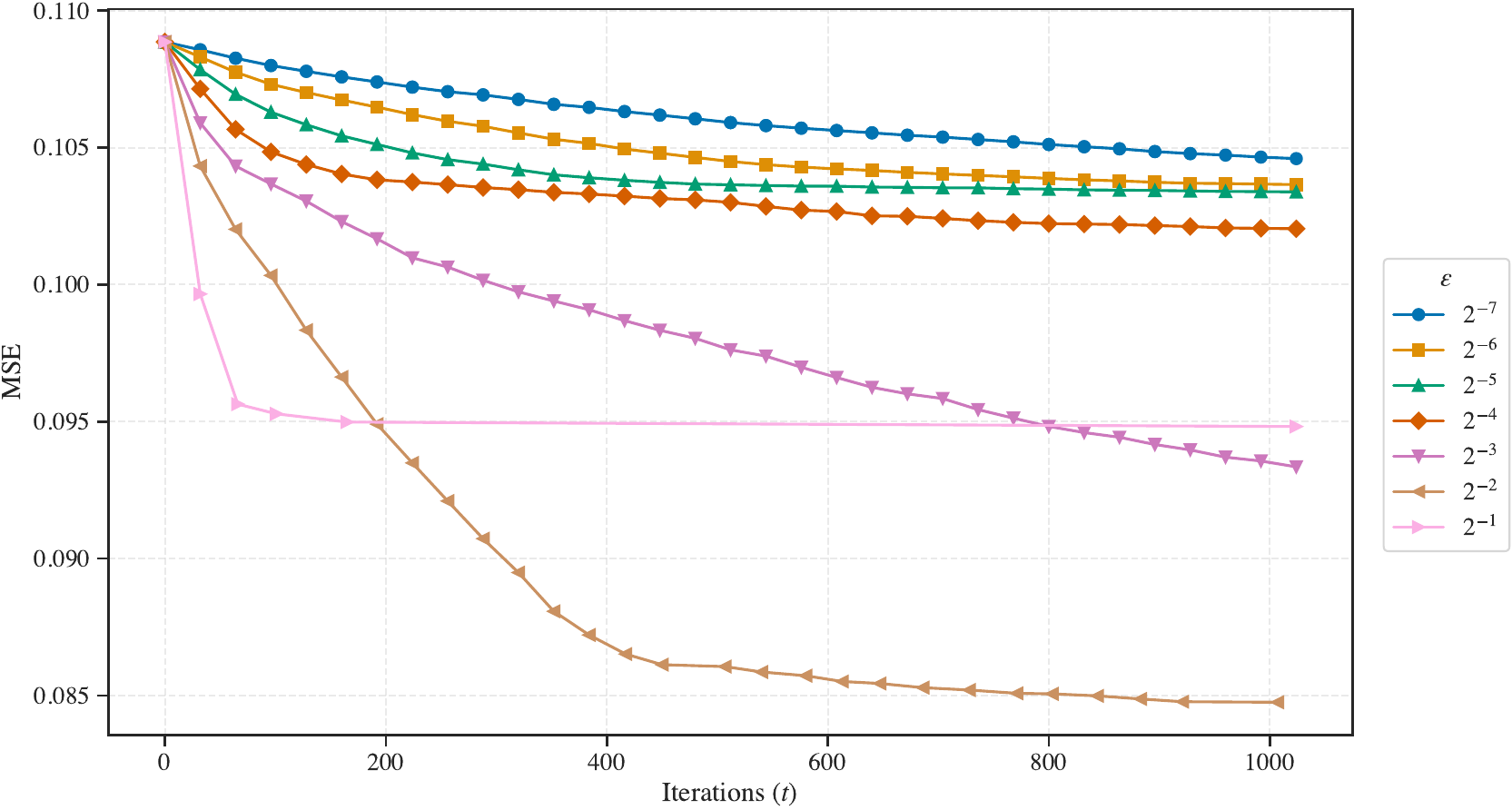}
        \caption{\textbf{Mean Squared Error dynamic for \textit{Maximum Matching}}, for a single seed.}
    \end{subfigure}

        \caption{\textbf{Mean Squared Error for the \textit{Best Action} setting and the \textit{Maximum Matching} setting.} In this plot, we set the number of samples to \textsf{CHECK} to be $1024$, so that we can focus on the MSE dynamic given that the averages are moderately well estimated for most of the $\eps$ configurations. The number of iterations is capped to $1024$, as explained in the main body. Convergence is easier when $m$ is low and $\eps$ is moderately high because the update is less sparse, and the optimization is more aggressive. However, if $\eps$ is too high, then the optimization stops very early, since every time a fresh sample is drawn, no new violations are found. Recall that here we are not early exiting the procedure when this happens: we continue to draw and use \textsf{CHECK} until the iteration cap is reached.}
        \label{fig:mse}
\end{figure}

\end{document}